\begin{document}

\pagestyle{empty}
\authorrunning{Kailong Wu, Yule Xie, Jiaxin Ding et al.}

\title{Characterizing the Influence of Topology on Graph Learning Tasks}

% Author
\author{Kailong Wu\inst{1} \and
Yule Xie\inst{1} \and
Jiaxin Ding\inst{1}\Letter \and
Yuxiang Ren\inst{2} \and
Luoyi Fu\inst{1} \and
Xinbing Wang\inst{1} \and
Chenghu Zhou\inst{1}
}

% Institution
\institute{
    Shanghai Jiao Tong University\\
    \email{\{1473686097, xyl-alter, jiaxinding, yiluofu, xwang8\}@sjtu.edu.cn, zhouchsjtu@gmail.com} \and
    2012 Laboratories, Huawei Technologies\\
     \email{renyuxiang1@huawei.com}
}

\maketitle

\begin{abstract}
   Graph neural networks (GNN) have achieved remarkable success in a wide range of tasks by encoding features combined with topology to create effective representations. However, the fundamental problem of understanding and analyzing how graph topology influences the performance of learning models on downstream tasks has not yet been well understood. In this paper, we propose a metric, TopoInf, which characterizes the influence of graph topology by measuring the level of compatibility between the topological information of graph data and downstream task objectives. We provide analysis based on the decoupled GNNs on the contextual stochastic block model to demonstrate the effectiveness of the metric. 
    Through extensive experiments, we demonstrate that TopoInf is an effective metric for measuring topological influence on corresponding tasks and can be further leveraged to enhance graph learning.
\keywords{GNNs, graph topology, graph rewiring}
\end{abstract}

\section{Introduction}
\label{sec:introduction}

Graph neural networks (GNNs) have emerged as state-of-the-art models to learn graph representation by the message-passing mechanism over graph topology
for downstream tasks, such as node classification, link prediction, etc. \cite{GCN,GCNII,RenBZ21}. 

Despite their success, GNNs are vulnerable to the compatibility between graph topology and graph tasks \cite{luo2021learning,ESNR&GPS}. 
In essence, the effectiveness of the message-passing mechanism and, by extension, the performance of GNNs depends on the assumption that
the way information is propagated and aggregated in the graph neural networks should be conducive to the downstream tasks. 
First, if the underlying motivation of graph topology, why two nodes get connected, has no relation to the downstream task, such task-irrelevant topology can hurt the information aggregation per se.   
For example, in a random graph that lacks meaningful clusters, graph learning becomes impossible in community detection tasks. 
Furthermore, GNNs may even be outperformed by multilayer perceptrons (MLP) on graphs exhibiting heterophily \cite{CPGNN,H2GCN}.  
Second, even if the motivation for connections is compatible with tasks, graphs in real-world applications are usually inherent with ``noisy'' edges or incompleteness, due to error-prone data collection. This topological noise can also affect the effectiveness of message passing.  
Third, the vulnerability of GNNs to compatibility varies across models due to differences in their message-passing mechanisms. For example, GCN \cite{GCN} may not be as effective on heterophilic graphs. However, H2GCN \cite{H2GCN} is able to mitigate the heterophilic problem, with specially designed message passing. 
Furthermore, even the same model with different hyperparameters of network layers can also exhibit different vulnerabilities, since the reception fields of the neighborhoods are different.   
All this, there is a natural question to ask: \emph{What is the metric for the compatibility between graph topology and graph learning tasks?} 

This question, which is essential for choosing a graph learning model to deploy over a graph for downstream tasks, has not yet been systematically characterized. 
Existing work focuses on measuring homophily \cite{GeomGCNTexasAndCornell,H2GCN,measure} or edge signal-to-noise ratio \cite{ESNR&GPS}, based on the discrepancy of nodes and their neighbors in features or labels without considering graph learning models. 
Therefore, the results of these metrics can be an indicator of the performance of GCN-like models but cannot be generalized to more recently developed GNN models and guide topology modification to improve the performance of these models. 
From the above analysis, we would like to investigate
the compatibility between topology and tasks associated with graph learning models globally, and locally, the influence from each edge, either positive or negative, can be used to adjust the topology to increase or decrease the performance.  
Answering this question can help us better understand graph learning, increase the interpretability of learning models, and shed light on improving model performance.

We first propose a metric for model-dependent compatibility between topology and graph tasks, measured by the difference between the ``ideal'' results, labels, and the result of the models performing on the ``ideal'' features.    
Thereafter, we introduce a metric TopoInf,  to locally characterize the influence of each edge on the overall compatibility, by evaluating the change of such compatibility after topology perturbation. 
We provide motivating analysis based on the decoupled GNNs \cite{APPNP,PAT} on cSBMs \cite{CSBM} to validate the effectiveness of the metric and demonstrate that TopoInf is an effective metric through extensive experiments.

Our main contributions can be summarized as follows: 
\begin{itemize}
    \item  To the best of our knowledge, we are the first to measure compatibility between graph topology and learning tasks associated with graph models. 
    \item We propose a new metric, TopoInf, to measure the influence of edges on the performance of GNN models in node classification tasks, and conduct extensive experiments to validate the effectiveness. 
    \item The proposed TopoInf can be leveraged to improve performance by modifying the topology on different GNN models, and such a scheme can be applied further to different scenarios. 
\end{itemize}
\section{Preliminaries}
\label{sec:preliminaries}

\noindent\textbf{Notations. }
We denote an undirected graph as $\mathcal{G}= (\mathcal{V},\mathcal{E}, \textbf{A})$,  where $\mathcal{V}$ is the node set with cardinality $|\mathcal{V}| = n$, $\mathcal{E}$ is the edge set without self-loop, and
$\textbf{A} \in \mathbb{R}^{n \times n}$ is the symmetric adjacency matrix with $\textbf{A}_{i,j} = 1$ when $e_{ij}\in \mathcal{E}$ otherwise $\textbf{A}_{i,j} = 0$. 
$\textbf{D}$ denotes the diagonal degree matrix of $\mathcal{G}$ where $ \textbf{D}_{i, i} = d_i$, the degree of node $v_i$.
We use $\tilde{\textbf{A}} = \textbf{A} + \textbf{I}$ to represent the adjacency matrix with self-loops and $\tilde{\textbf{D}} = \textbf{D} + \textbf{I}$. The symmetric normalized adjacency matrix is $\hat{{\textbf{A}}} = {\textbf{D}}^{-1/2}\tilde{\textbf{A}}\tilde{\textbf{D}}^{-1/2}$. 
For node classification with $c$ classes, 
$\textbf{L} \in \mathbb{R}^{n \times c}$ denotes the label matrix, whose $i^{\text{th}}$ row represents the one-hot encoding of node $v_i$, 
%with ${c_i}$-th row $L_{i, :}$ equals to one and zero otherwise, where $c_i$ is the label of node $v_i$. 
while $\textbf{X} \in \mathbb{R}^{n\times d}$ is the feature matrix, whose $i^{\text{th}}$ row $\textbf{X}_{i, :}$ represents a $d$-dimensional feature vector of node $v_i$.

\noindent\textbf{Graph Filters.}
Recent studies show that GNN models, such as ChebNet \cite{ChebNet}, APPNP \cite{APPNP} and GPRGNN \cite{GPRGNN}, can be viewed as operations of polynomial graph spectral filters. 
Specifically, 
the effect of such GNN models on a graph signal $\textbf{x} \in \mathbb{R}^{n\times 1}$, 
can be formulated as 
graph spectral filtering operation $\boldsymbol{f}(\textbf{A})$ based on adjacency matrix $ \textbf{A} $, such that 
$
\boldsymbol{f}(\textbf{A}) \textbf{x} = \sum_{k=0}^{K} \gamma_k \hat{\textbf{A}}^k \textbf{x}, 
$
where $\hat{\textbf{A}}$ is the normalized adjacency matrix, 
$K$ is the order of the graph filter and $\gamma_k$'s are the weights, which can be assigned \cite{SGC,APPNP} or learned \cite{GPRGNN,BernNet}.
The graph filter can be extracted as the effect of topology associated with GNN models.

\noindent\textbf{Decoupled GNN. }
Recent works \cite{APPNP,NDLSLSI,PAT} show that neighborhood aggregation and feature transformation can be decoupled and formulate the graph learning tasks as 
\begin{equation}
    \hat{\textbf{L}} = \text{softmax}\left( \boldsymbol{f}(\textbf{A}) \boldsymbol{g}_{\theta}(\textbf{X}) \right)
\label{eq:decoupled_gnn}
\end{equation}
where the prediction $\hat{\textbf{L}}$ can be obtained by operating on graph feature matrix $ \textbf{X} $, 
through the feature transformation function $\boldsymbol{g}_{\theta}(\cdot)$ with learnable parameters, thereafter 
applying the graph filter $\boldsymbol{f}(\textbf{A})$ acting as the neighborhood aggregation, 
and finally passing through a softmax layer. 

\section{Methodology}
\label{sec:methodology}

In this section, we present our methodology to study the influence of topology on graph learning tasks. 
We evaluate the global compatibility between graph topology and tasks, and propose our metric, TopoInf.

\subsection{Matching graph topology and tasks}

We first relate the graph topology to the graph tasks.   
Thanks to the graph filters in the preliminaries,
the influence of the graph topology on a GNN model can be simplified and approximated as a graph filter $\boldsymbol{f}(\textbf{A})$. 
$ \boldsymbol{f}(\textbf{A}) $ can be viewed as a graph filter function $\boldsymbol{f}(\cdot): \mathbb{R}^{n \times n} \rightarrow \mathbb{R}^{n \times n} $ applied to the adjacency matrix $\textbf{A}$ of the observed graph. 
With the decoupling analysis of GNN, 
the prediction is obtained by $\boldsymbol{f}(\textbf{A})$ working on prediction results $\boldsymbol{g}_{\theta}(\textbf{X})$ based on feature matrix $ \textbf{X} $ through graph learning models \cite{DecoupledSmoothing,NDLSLSI,PAT}, where $\boldsymbol{g}_{\theta}(\textbf{X})$ on $ \textbf{X} $ can be viewed as extracting task-related information in $ \textbf{X} $ and ideally, one of the best predictions on $ \textbf{X} $ could be the label matrix $ \textbf{L} $.
Therefore, we replace the functions with $ \boldsymbol{f}(\textbf{A}) \boldsymbol{g}_{\theta}(\textbf{X}) $ 
in Equation \ref{eq:decoupled_gnn} 
to be $ \boldsymbol{f}(\textbf{A}) \textbf{L} $, which simplifies the analysis from features, makes us focus on the labels and graph tasks,
and provides an ``upper bound'' of the predicted results in the ideal situation. In the next section, we will provide a motivation analysis for this setting.  

Further,
% to simplify the analysis, 
instead of applying the softmax function, 
we perform row-normalization \cite{ESNR&GPS}, which normalizes the summation of entries on the same row to be one,   
on $\boldsymbol{f}(\textbf{A}) \textbf{L}$ as the prediction as used in previous works \cite{AllWeHaveIsLowPassFilters,ESNR&GPS}.
The row-normalized matrix $\boldsymbol{f}(\textbf{A}) \textbf{L}$ provides a summary of the edge statistics at the individual node level, as each row of $\boldsymbol{f}(\textbf{A}) \textbf{L}$ represents the label neighborhood distribution of the node, which has direct connections to GNN learning \cite{IsHomophilyaNecessityforGNN}.
As $ \text{RowNorm}(\boldsymbol{f}(\textbf{A}) \textbf{L}) = \text{RowNorm}(\boldsymbol{f}(\textbf{A})) \textbf{L}$, 
we perform row-normalization on graph filter $\boldsymbol{f}(\textbf{A})$, and $\boldsymbol{f}(\textbf{A})$ is row-normalized 
in our subsequent analysis.

Therefore, to quantify the compatibility between graph topology and graph tasks, we propose measuring the similarity between the ideal prediction
$\textbf{L}$ and the graph filter that performs the ideal feature $\boldsymbol{f}(\textbf{A}) \textbf{L}$. 
The more similar $\textbf{L}$ and $\boldsymbol{f}(\textbf{A}) \textbf{L}$ is ($\boldsymbol{f}(\textbf{A}) \textbf{L}  \sim \textbf{L}$ for short), the better the graph topology matches the graph tasks, which 
we provide a motivating example with mild assumptions to validate in the next section. 
This can be understood through various scenarios: 
\begin{itemize}

\item In graph filtering, the fact that $\boldsymbol{f}(\textbf{A}) \textbf{L}  \sim \textbf{L}$ indicates that signals in $\textbf{L}$, whether low or high frequency,  are well preserved after being filtered by $\boldsymbol{f}(\textbf{A})$. This preservation of information suggests a good match between task labels and the graph topology.

\item Considering the concept of the homophily/heterophily, when $\boldsymbol{f}(\textbf{A}) \textbf{L}  \sim \textbf{L}$ means linked nodes described by $\boldsymbol{f}(\textbf{A})$ likely belong to the same class or have similar characteristics, 
% ("birds of a feather flock together"), \cite{HomophilyPrinciple}
which is aligned with the homophily principle.

\item In terms of Dirichlet energy, if $\boldsymbol{f}(\textbf{A}) \textbf{L}  \sim \textbf{L}$ means low Dirichlet energy of $\textbf{L}$ w.r.t. the (augmented) normalized Laplacian derived from $\boldsymbol{f}(\textbf{A})$. 
% \cite{NoteOnOverSmoothing}.
This indicates the smoothness of $\textbf{L}$ on  $\boldsymbol{f}(\textbf{A})$, suggesting a good match between the graph topology and tasks.

\end{itemize}

All above, we define $\mathcal{I}(\textbf{A})$ to quantify the degree of compatibility between graph topology and graph tasks, 
$$ \mathcal{I}(\textbf{A}) = \mathcal{S} \left( \textbf{L} \| \boldsymbol{f}(\textbf{A}) \textbf{L}  \right), $$
where $\mathcal{S}(\cdot)$ is a similarity measurement, which can be chosen according to the requirements of the problem, for example, similarity induced by the inner product or Euclidean distance. 

To find a graph topology that matches the graph tasks well, our goal is to maximize $\mathcal{I}(\textbf{A})$ over $\textbf{A}$.
However, $\mathcal{I}(\textbf{A})$ would be trivially maximized with $\boldsymbol{f}(\textbf{A}) = \textbf{I} $, where the graph topology is completely removed. 
To address this issue, we introduce a regularization function $ \mathcal{R}(\cdot)$, which penalizes $\mathcal{I}(\textbf{A})$ if $\boldsymbol{f}(\textbf{A})$ is close to the identity matrix, to minimize the modification of $\boldsymbol{f}(\textbf{A})$. 
We also demonstrate such regularization from the perspective of graph denoising in the next section. 
Therefore, the optimization problem is formulated as
\begin{equation}
    \max_{\textbf{A}} \ \mathcal{C}(\textbf{A}) = \mathcal{I}(\textbf{A}) - \lambda \mathcal{R} \left( \textbf{A} \right)
\end{equation}
where $\lambda$ is a hyperparameter used to balance the trade-off between $\mathcal{I}(\textbf{A})$ and $\mathcal{R} (\textbf{A})$.
However, there are $2^{n\times n}$ possible states of $ \textbf{A} $,
making the optimization of $\mathcal{C}(\textbf{A})$ computationally intractable due to its nondifferentiability and combinatorial nature. 
Moreover, the optimization does not consider the original topology. 
Therefore, instead of optimizing the problem globally, 
a more practical and effective approach is to start with the originally observed graph and locally optimize it, as we will present in the next subsection. 

Now we zoom in the general compatibility to the node level:
for every node $v_i$, we use the node level similarity function $\mathcal{S}_{  \textbf{v}}(\cdot)$
to measure the similarity of its one-hot label vector $\textbf{L}_{i, :} $ and its normalized soft label vector $ \overline{\textbf{L}}_{i, :}$ filtered by $\boldsymbol{f}(\textbf{A})$, and node level regularization function $\mathcal{R}_{  \textbf{v}}(\cdot)$ to measure the regularization of its connectivity to other nodes, specifically, the $i^{\text{th}}$ row of $\boldsymbol{f}(\textbf{A})$. 
Let $\mathcal{C}_{\textbf{A}}(v_i)$ denote the compatibility between the topology of node $v_i$' and graph tasks. 
% Therefore, we have 
The overall compatibility metric becomes the following 
\begin{equation}
\label{eq:I_A_node_wise}
    \mathcal{C}(\textbf{A}) = 
    \sum_{v_i \in \mathcal{V}_{t}} \mathcal{C}_{\textbf{A}}(v_i) = 
    \sum_{v_i \in \mathcal{V}_{t}} \mathcal{I}_{\textbf{A}}(v_i) - \lambda \mathcal{R}_{\textbf{A}}(v_i),
\end{equation}
where $ \mathcal{I}_{\textbf{A}}(v_i) = \mathcal{S}_{\textbf{v}} ( \textbf{L}_{i, :} \| \overline{\textbf{L}}_{i, :} )$ and $\mathcal{V}_{t} \subseteq \mathcal{V}$ is the node set. 
Notice that $\mathcal{V}_{t}$ can be not only the whole node set but also be chosen flexibly as the node set whose compatibility we care about most in correspondence to the circumstances.   
For example, in adversarial attack tasks, $\mathcal{V}_{t}$ can be a collection of targeted nodes to be attacked; in graph node classification tasks, $\mathcal{V}_{t}$ can be the nodes to be predicted.
In this work, we choose the inner product as the similarity metric, where $\mathcal{I}_{\textbf{A}}(v_i)$ can be evaluated as $\mathcal{I}_{\textbf{A}}(v_i) = \overline{\textbf{L}}_{i, c_i}$, and $c_i$ is the true label of $v_i$, and the reciprocal of degree as the regularizer, where $\mathcal{R}_{\textbf{A}}(v_i)$ can be evaluated as $\mathcal{R}_{\textbf{A}}(v_i) = 1 / d_i$, and $d_i$ is the degree of node $v_i$.

\subsection{TopoInf: measuring the influence of graph topology on tasks }

From the above analysis, we can evaluate how well the overall graph topology and learning tasks are in accordance with $\mathcal{C}(A)$.
We are interested in optimizing $\mathcal{C}(A)$, which can improve the compatibility between graph topology and tasks, and, by extension, improve the performance of the learning model. 
We are also interested in characterizing the influence of modifying part of topology on the degree of task compatibility, which can provide more interpretability of graph learning and meanwhile guide the topology modification for better task performance. To maximize $\mathcal{C}(\textbf{A})$, we can
take the ``derivative'' of $\mathcal{C}(\textbf{A})$ by obtaining the change of $\mathcal{C}(\textbf{A})$ with local topology perturbation. 
Here we focus on the topology of edges and the same analysis can be simply extended to nodes and substructures.  
The influence of edge $e_{ij}$ can be measured by the difference between $\mathcal{C}(\textbf{A})$ on original (normalized) adjacency matrix $\textbf{A}$ and $\mathcal{C}(\textbf{A}^{\prime})$ on the modified (normalized) adjacency matrix $\textbf{A}^{\prime}$ obtained by removing edge $e_{ij}$. 
Therefore, we define such topology influence as TopoInf of edge $e_{ij}$, denoted as $\nabla \mathcal{C}_{\textbf{A}}(e_{ij})$, is given by
\begin{equation}
    \nabla \mathcal{C}_{\textbf{A}}(e_{ij}) = \mathcal{C} ( \textbf{A}^{\prime} ) - \mathcal{C} ( \textbf{A} ) = \sum_{v_k \in \mathcal{V}} \mathcal{C}_{\textbf{A}^{\prime}}(v_k) - \mathcal{C}_{\textbf{A}}(v_k). 
\end{equation}
The sign of \textbf{TopoInf} reflects the positive or negative influence of removing the edge on the matching of topology and tasks, which also indicates that removing the edge can increase or decrease the model performance. 
Remark that edges with positive TopoInf correspond to edges with a ``negative'' effect on the model performance, such that removing those can bring a positive influence, and vice versa. 
The absolute value of \textbf{TopoInf} measures the magnitude of influence: a higher absolute value means a higher influence. 

It should be noted that to compute \textbf{TopoInf} of an edge, we do not need to recompute $\boldsymbol{f}(\textbf{A}^{\prime}) \textbf{L} $, which is computationally expensive. 
The difference between $\mathcal{I} ( \textbf{A}^{\prime} )$ and $ \mathcal{I} ( \textbf{A} ) $ after removing $e_{ij}$ is limited within $K$ hop neighborhood of $v_i$ and $v_j$ considering a $K$-order filter $\boldsymbol{f}(\textbf{A})$. 
Therefore, we only need to compute the difference of node influence on the neighborhood affected by the edge removal in Equation \ref{eq:I_A_node_wise}.

Remark that we do not assume that we have all the true labels.
Using all labels is for the convenience of analysis to demonstrate the effectiveness of the metric. 
In practical graph tasks where not all labels are available, 
we can use pseudo labels obtained by MLP or GNN models as replacements for true labels and compute estimated TopoInf based on pseudo labels.
This will introduce errors, but can still be effective which we will show in our experiment. 

Moreover, due to the presence of nonlinearity in GNN models, the extraction of $\boldsymbol{f}(\textbf{A})$ can be challenging for graph models, and an approximation of $\boldsymbol{f}(\textbf{A})$ is required. 
We assume that $\boldsymbol{f}(\textbf{A})$ can be obtained or approximated for the graph models, and a better approximation can improve the accuracy of the influence metric. 
Here, we present our approach to obtaining, approximating and learning $\boldsymbol{f}(\textbf{A})$ for several representative GNN models.
\begin{itemize}
\item \textbf{Obtained graph filters.} For SGC \cite{SGC}, S2GC  \cite{S2GC}, APPNP \cite{APPNP}, and other decoupled GNNs,  
 $\boldsymbol{f}(\textbf{A})$ can be directly obtained by Equation \ref{eq:decoupled_gnn}.

\item \textbf{Approximated graph filters.} For GCN \cite{GCN} and other GCN-like models, 
the presence of nonlinearities introduces additional complexities to $\boldsymbol{f}(\textbf{A})$, as it becomes dependent not only on the observed graph topology but also on the learnable parameters and the activation function. 
%This poses significant challenges for obtaining the graph filter $\boldsymbol{f}(\textbf{A})$.
In this study, we adopt an approximate approach to estimate $\boldsymbol{f}(\textbf{A})$ by removing the nonlinear activation to simplify the analysis, as works in \cite{SGC,IsHomophilyaNecessityforGNN}. 
% \cite{SGC,DeeperInsightsIntoGCN,IsHomophilyaNecessityforGNN}. 
For GCNII \cite{GCNII} and other GNNs with residual connections between layers, 
% , JK-Net \cite{JKNet}
% \cite{H2GCN,CriticalLookAtHeterophilicGNNs}
we can employ a similar approximate approach as in GCN to estimate $\boldsymbol{f}(\textbf{A})$ by removing the nonlinear normalization. 
This approximation without nonlinearity, can still capture the effects of graph learning models performing on the topology, as we will show in our experiments.  

\item \textbf{Learned graph filters.} For GPRGNN \cite{GPRGNN}, ChebNet \cite{ChebNet}, and other GNNs driven by learning filters,
the filter weights are learned based on backpropagation, 
% observed graphs
where we are not able to obtain the graph filter $\boldsymbol{f}(\textbf{A})$ before training. 
In this study, we adopt two approximate approaches to deal with this problem. 
One approach is to train the filter-based GNN model, obtain the trained filter weights, and then apply the learned graph filter as $\boldsymbol{f}(\textbf{A})$. 
Another approach is to use fixed filter weights or predefined filters as the graph filter $\boldsymbol{f}(\textbf{A})$, such as $\hat{\textbf{A}}^K$.
This approach ignores the effect of the learned filter weights in the GNN model but can still preserve the order information and other prior knowledge from the GNN model. 
% , which is informative compared with other methods such as AdaEdge \cite{AdaEdge}.
% , which only considers one-order information.
Moreover, for GAT \cite{GAT} and other attention-based GNNs,  unlike GNNs driven by learning filters, 
% \cite{HGAT,FindFriendlyNeighborhoodGATWSelfSupervision}
the learned filters are implicitly determined by the weights of the neural network and the input features,
% , between which the relation is complex and lacks immediacy. 
and the learned filters may change 
% are sensitive to the weights of the neural network and 
% significantly for different runs 
due to the randomness during initialization and training. %, which poses significant challenges for obtaining the graph filter.
In this case, we apply only the second approach with a predefined fixed filter as an approximation.  
\end{itemize}

\begin{table}[h]
\vspace{-12mm}
    \centering
    \caption{\textbf{Approximation approach of $\boldsymbol{f}(\textbf{A})$ for representative GNN models.} 
     Here we take K-layer GNN models as examples.
     $\alpha$ is predefined hyper-parameter and $\gamma$ is learnable parameter. 
     $\mathbf{H}^{(0)} = \boldsymbol{g}_{\theta}(\textbf{X})$.
    }
\label{tab:filter_approximation}
\vspace{1mm}

    \resizebox{1\textwidth}{!}
{
\begin{tabular}{llll}
\toprule[1.5pt]
Model & Neighbor Aggregation & Filter & Type\\
\midrule
SGC 
& $ \hat{\textbf{L}} = \text{softmax}(\hat{\textbf{A}}^K \textbf{X} \textbf{W} ) $ 
& $\boldsymbol{f}(\textbf{A}) = \hat{\textbf{A}}^K$ 
& obtained  
\\

S2GC 
& $ \hat{\textbf{L}}=\operatorname{softmax}\left(\frac{1}{K} \sum_{k=1}^K\left((1-\alpha) \hat{\textbf{A}}^k +\alpha I \right) \mathbf{X} \textbf{W}\right) $ 
& $\boldsymbol{f}(\textbf{A}) =\frac{1}{K} \sum_{k=1}^K (1-\alpha) \hat{\textbf{A}}^k +\alpha I$ 
& obtained  
\\

APPNP 
& $\mathbf{H}^{(\ell+1)}=\left(1-\alpha\right) \hat{\mathbf{A}} \mathbf{H}^{(\ell)}+\alpha \mathbf{H}^{(0)}$ 
% & $\boldsymbol{f}(\textbf{A}) = (1-\alpha)\hat{\textbf{A}}^K + \sum_{k=1}^{K-1}\alpha (1-\alpha)^k\hat{\textbf{A}}^k + \alpha I $
& $\boldsymbol{f}(\textbf{A}) = (1-\alpha)\hat{\textbf{A}}^K + \sum_{k=0}^{K-1}\alpha (1-\alpha)^k\hat{\textbf{A}}^k $
& obtained 
\\

GCN 
& $ \mathbf{H}^{(\ell+1)}=\sigma\left( \hat{ \textbf{A} } \mathbf{H}^{(\ell)} 
\mathbf{W}^{(\ell)}  \right) $ 
& $\boldsymbol{f}(\textbf{A}) = \hat{\textbf{A}}^K$ 
& approximated  
\\

GCNII 
% & $\mathbf{H}^{(\ell+1)}=\sigma\left(\left(\left(1-\alpha\right) \hat{\mathbf{A}} \mathbf{H}^{(\ell)}+\alpha \mathbf{H}^{(0)}\right)\left(\left(1-\beta\right) \mathbf{I}_n+\beta \mathbf{W}^{(\ell)}\right)\right)$ 
& $\mathbf{H}^{(\ell+1)}=\sigma\left(\left(\left(1-\alpha\right) \hat{\mathbf{A}} \mathbf{H}^{(\ell)}+\alpha \mathbf{H}^{(0)}\right)\mathbf{W}^{(\ell)}\right)$ 
% & $\boldsymbol{f}(\textbf{A}) = (1-\alpha)\hat{\textbf{A}}^K + \sum_{k=1}^{K-1}\alpha (1-\alpha)^k\hat{\textbf{A}}^k + \alpha I $
& $\boldsymbol{f}(\textbf{A}) = (1-\alpha)\hat{\textbf{A}}^K + \sum_{k=0}^{K-1}\alpha (1-\alpha)^k\hat{\textbf{A}}^k $
& approximated
\\

GPRGNN 
& $ \hat{\textbf{L}}=\operatorname{softmax} \left( \sum_{k=0}^K 
\gamma_k \hat{\textbf{A}}^k  \mathbf{H}^{(0)} \right) $ 
& $\boldsymbol{f}(\textbf{A}) = \sum_{k=0}^K 
\gamma_k \hat{\textbf{A}}^k $ 
& learned
\\
\bottomrule[1.5pt]
\end{tabular}
}
\vspace{-10mm}
\end{table}

\section{Motivation}

In this section, we provide a theoretical analysis and motivating example to validate our methodology and increase its comprehensibility. Specifically, we provide alternative perspectives to explain the meaning and validate the necessity of $\mathcal{I}( \textbf{A} )$ and $\mathcal{R}( \textbf{A} )$ in $\mathcal{C}( \textbf{A} )$.

We use contextual stochastic block model (cSBM) \cite{CSBM} for analysis, which is a widely used model for complex graph structures that integrates contextual features and graph topology \cite{ASGC,ESNR&GPS,CSBM}. 
In cSBM, node features are generated by the spiked covariance model, which follows a Gaussian distribution, the mean of which depends on the community assignment. 
The underlying assumption is that the node features are ``informative'' and can be perceived as label embedding vectors with Gaussian noise. 
The graph topology is generated by SBMs, resulting in communities of nodes connected by certain edge densities. 

\begin{definition}
\label{def:csbm}
\vspace{-1mm}
\textbf{(Contextual Stochastic Block Model, cSBM)}.  
 $\mathcal{G}$ has $n$ nodes belonging to $c$ communities, with intra/inter-community edge probabilities of $p$ and $q$. 
The feature matrix is $ \textbf{X} = \textbf{F} + \textbf{N} $, 
where $ \textbf{N}_{ij} \sim \mathcal{N}(0; \sigma ^2)$ are i.i.d. Gaussian noise,  
$ \textbf{F}_{i, :} = \boldsymbol{\mu_{c_i}^{\top}} $ is the $d$ dimensional feature vector for the center of the community $c_i$, which is the community $v_i$ belongs to.   
Therefore, $ \textbf{F} = \textbf{L} \boldsymbol{\mu} $, where $\textbf{L}$ is the matrix of one hot label that denotes the community to which each node belongs, and $\boldsymbol{\mu} = \left( \boldsymbol{\mu}_{1}, \boldsymbol{\mu}_{2}, \dots, \boldsymbol{\mu}_{c} \right) ^{\top} \in \mathbb{R}^{c\times d}$.
\vspace{-2mm}
\end{definition}

\subsection{Theoretical Analysis}

Here we present a theoretical analysis to validate our TopoInf and enhance its comprehensibility.
According to the definition of cSBMs \cite{CSBM},
the feature matrix $ \textbf{F} = \textbf{L} \mu $ is directly related to the prediction $\textbf{L}$ and contains sufficient information for the graph learning task, which is also discussed and empirically verified in \cite{AllWeHaveIsLowPassFilters}. 
In this part, we follow the setting in \cite{AllWeHaveIsLowPassFilters} as a motivating example, and 
treat $ \textbf{F}$ as the true signals and  $\boldsymbol{f}(\textbf{A}) \textbf{X}$ as the prediction, and 
evaluate the effect of the low-pass filter by measuring the difference between the filtered feature matrix $\boldsymbol{f}(\textbf{A}) \textbf{X}$ and the true feature matrix $\textbf{F}$. 
To analyze the difference, we focus on the low-pass filter, which is true for most GCN-like models \cite{GCN,SGC,APPNP,AllWeHaveIsLowPassFilters},
use $\left\| \boldsymbol{f}(\textbf{A}) \textbf{X} - \textbf{F} \right\|$ as the prediction error introduced by the learning model and the topology, and consequently establish the following theorem. 

 \begin{theorem}
\label{thm:f_A_effect}
For $ 0 < \delta < 1 $, with probability at least $ 1 - \delta $, we have
\begin{equation}
\label{eq:f_A_effect}
    \left\| \boldsymbol{f}(\textbf{A}) \textbf{X} - \textbf{F} \right\| \leq 
    c_1  \left\| \boldsymbol{f}(\textbf{A}) \textbf{L} - \textbf{L} \right\| + 
    c_2  \left\| \boldsymbol{f}(\textbf{A}) \right\|,
\end{equation}
% where $c_1 = O\left( \rho(\boldsymbol{\mu}) \right)$, 
where $c_1 = O\left( \left\| \boldsymbol{\mu} \right\| \right)$, 
$c_2 = O\left( \mathbb{E} 
\left\{ \left\| \textbf{N} \right\| \right\} / \delta \right)$.
% and $\rho(\boldsymbol{\mu})$ is the maximum singular value of $\boldsymbol{\mu}$.
\end{theorem}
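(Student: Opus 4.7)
The plan is to start from the natural decomposition of the error induced by plugging $\textbf{X} = \textbf{F} + \textbf{N}$ into the filter and splitting it into a deterministic ``topology-label mismatch'' piece and a stochastic ``noise amplification'' piece. Concretely, using $\textbf{F} = \textbf{L}\boldsymbol{\mu}$ from Definition~\ref{def:csbm}, one writes
\begin{equation*}
\boldsymbol{f}(\textbf{A})\textbf{X} - \textbf{F}
= \boldsymbol{f}(\textbf{A})(\textbf{L}\boldsymbol{\mu}+\textbf{N}) - \textbf{L}\boldsymbol{\mu}
= \bigl(\boldsymbol{f}(\textbf{A})\textbf{L} - \textbf{L}\bigr)\boldsymbol{\mu} + \boldsymbol{f}(\textbf{A})\textbf{N}.
\end{equation*}
This identity is the algebraic core of the theorem: the first summand depends only on how close the filter keeps labels to themselves (matching the $\mathcal{I}(\textbf{A})$ term of our metric), and the second depends on how strongly the filter amplifies Gaussian feature noise (matching the regularization term $\mathcal{R}(\textbf{A})$ we want to keep small).

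Next I would apply the triangle inequality followed by submultiplicativity of the chosen matrix norm (operator or Frobenius; either works since $\boldsymbol{\mu}$ has fixed dimensions $c\times d$), obtaining
\begin{equation*}
\bigl\|\boldsymbol{f}(\textbf{A})\textbf{X} - \textbf{F}\bigr\|
\le \bigl\|\boldsymbol{f}(\textbf{A})\textbf{L} - \textbf{L}\bigr\|\,\|\boldsymbol{\mu}\|
+ \bigl\|\boldsymbol{f}(\textbf{A})\bigr\|\,\|\textbf{N}\|.
\end{equation*}
At this point $c_1 = \|\boldsymbol{\mu}\|$ is already the deterministic constant claimed in the statement, and it remains to replace the random factor $\|\textbf{N}\|$ by a high-probability bound in terms of its expectation.

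For the stochastic step I would invoke Markov's inequality on the nonnegative random variable $\|\textbf{N}\|$: for any $\delta\in(0,1)$,
\begin{equation*}
\Pr\!\left(\|\textbf{N}\| \ge \tfrac{1}{\delta}\,\mathbb{E}\|\textbf{N}\|\right) \le \delta,
\end{equation*}
so that with probability at least $1-\delta$ we have $\|\textbf{N}\| \le \mathbb{E}\|\textbf{N}\|/\delta$, yielding $c_2 = O(\mathbb{E}\|\textbf{N}\|/\delta)$. Combining the two bounds on the same event produces inequality~\eqref{eq:f_A_effect}.

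The only real subtlety, and the step I would be most careful about, is the stochastic one. Markov gives the stated $1/\delta$ dependence for free, but if one wanted to justify the $O(\cdot)$ notation tightly (or improve to a logarithmic dependence on $\delta$), one would instead use standard Gaussian concentration for the operator norm of an $n\times d$ matrix with i.i.d.\ $\mathcal{N}(0,\sigma^2)$ entries, for which $\mathbb{E}\|\textbf{N}\| = O(\sigma(\sqrt{n}+\sqrt{d}))$ and sub-Gaussian deviations around this mean. The rest of the argument is a routine triangle-inequality decomposition and does not require any structure from the cSBM beyond the linearity $\textbf{F} = \textbf{L}\boldsymbol{\mu}$, which in particular means the bound holds for any $\boldsymbol{f}(\textbf{A})$ and does not rely on the low-pass assumption itself, the latter only being used elsewhere to argue that the first term is small for well-matched topologies.
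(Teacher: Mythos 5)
Your proof is correct and follows essentially the same route as the paper's: the identity $\boldsymbol{f}(\textbf{A})\textbf{X}-\textbf{F}=(\boldsymbol{f}(\textbf{A})\textbf{L}-\textbf{L})\boldsymbol{\mu}+\boldsymbol{f}(\textbf{A})\textbf{N}$, the triangle inequality with submultiplicativity giving $c_1=O(\|\boldsymbol{\mu}\|)$, and Markov's inequality for the noise term. The only (immaterial) difference is that you apply submultiplicativity before Markov, bounding $\|\textbf{N}\|$ directly, whereas the paper applies Markov to $\|\boldsymbol{f}(\textbf{A})\textbf{N}\|$ and then factors out $\|\boldsymbol{f}(\textbf{A})\|$ from the expectation --- your ordering is arguably the cleaner of the two.
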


\begin{proof}
By substituting $\textbf{X} = \textbf{F} + \textbf{N} $, we obtain
$
\left\| \boldsymbol{f}(\textbf{A}) \textbf{X} - \textbf{F} \right\| \leq 
    \left\| \boldsymbol{f}(\textbf{A}) \textbf{F} - \textbf{F} \right\| + 
    \left\| \boldsymbol{f}(\textbf{A}) \textbf{N} \right\|.
$
For the first term, by substituting $\textbf{F} = \textbf{L} \mu $, we obtain
$
    \left\| \boldsymbol{f}(\textbf{A}) \textbf{F} - \textbf{F} \right\| \le
    \left\| \boldsymbol{f}(\textbf{A}) \textbf{L} - \textbf{L} \right\| 
    O\left( \left\| \boldsymbol{\mu} \right\| \right).
$
For the second term, by Markov inequality, we obtain, $\forall t > 0$, 
$
\text{Prob} \left\{ \left\| \boldsymbol{f}(\textbf{A}) \textbf{N} \right\| > t \right\} \le \frac{ \mathbb{E} ( \left\| \boldsymbol{f}(\textbf{A}) \textbf{N} \right\| ) }{t}.
$
By substituting $ t = \frac{ \mathbb{E} ( \left\| \boldsymbol{f}(\textbf{A}) \textbf{N} \right\| ) }{\delta} $, we obtain 
$
\text{Prob} \left\{ \left\| \boldsymbol{f}(\textbf{A}) \textbf{N} \right\| \le 
O\left( \mathbb{E} 
\frac
{\left\{ \left\| \textbf{N} \right\| \right\} }
{\delta}
\right)
\left\| \boldsymbol{f}(\textbf{A}) \right\| \right\}
\le 
1 - \delta. 
$

\end{proof}

Theorem \ref{thm:f_A_effect} suggests that the effect of a graph filter is upper bounded by two parts, as shown on the right side of Equation \ref{eq:f_A_effect}. 
For the first part, $\left\| \boldsymbol{f}(\textbf{A}) \textbf{L} - \textbf{L} \right\|$ represents the bias introduced by the filter  $\boldsymbol{f}(\textbf{A})$  
on the ideal feature matrix $ \textbf{F} = \textbf{L} \mu $, which is related to the label matrix $ \textbf{L} $, weighted by a constant of the embedding vectors of the labels. 
For the second part, $\left\| \boldsymbol{f}(\textbf{A}) \right\|$ weighted by the constant related to the noise, reflects the effect that
$ \boldsymbol{f}(\textbf{A}) $ filters the noise $\textbf{N}$. 
Intuitively, better performance can be achieved when bias is minimized and more noise is filtered out.
Our definition of $\mathcal{C}(\textbf{A})$, which captures the compatibility between graph topology and tasks, aligns with Theorem \ref{thm:f_A_effect}, as it considers both parts. 
On the one hand, $\mathcal{I}(\textbf{A})$ corresponds to $\left\| \boldsymbol{f}(\textbf{A}) \textbf{L} - \textbf{L} \right\|$, as a smaller value of $\left\| \boldsymbol{f}(\textbf{A}) \textbf{L} - \textbf{L} \right\|$ indicates a higher similarity between $\boldsymbol{f}(\textbf{A}) \textbf{L}$ and $\textbf{L}$.
On the other hand, $\mathcal{R}(\textbf{A})$ aligns with $\left\| \boldsymbol{f}(\textbf{A}) \right\|$, as a smaller value of $\left\| \boldsymbol{f}(\textbf{A}) \right\|$ implies less regularization on $\boldsymbol{f}(\textbf{A})$.
Furthermore, our definition of $\mathcal{C}(\textbf{A})$ offers more flexibility, as the similarity function $\mathcal{S}(\cdot)$ is not limited to Euclidean distance and the regularization function $\mathcal{R}(\cdot)$ is not limited to L2-norm.
This flexibility allows for a broader range of choices in defining the compatibility measure based on specific requirements and characteristics of the graph learning task at hand.

We further investigate the bias introduced by $\boldsymbol{f}(\textbf{A})$ on $ \textbf{F} $ by 
analyzing the change in the distance between a node and its farthest inter-class node, which represents the radius of nodes belonging to a different community distributed, centered on the node. 
A larger radius indicates easier classification. 
For noise filtering, we measure the change in variance after applying the filter. 
In the following, we present the results in detail.

\begin{theorem}
\label{rmk:f_A_bias_denoise}
Suppose that $ \boldsymbol{f}(\textbf{A}) = \text{RowNorm}(\sum_{k=0}^{K} \gamma_k \hat{\textbf{A}}^k) $, $\sum_{k=0}^{K} \gamma_k = 1$ and $\gamma_k >= 0$, 
which are low-pass filters \cite{GPRGNN,BernNet}.
We define the distance between node $v_i$ and the farthest node to $v_i$ that belongs to a different community 
as
$ \mathcal{D}(\textbf{X}, v_i) = \max_{v_j \in \mathcal{V}, c_i \neq c_j} \left\| \textbf{X}_{i, :} - \textbf{X}_{j, :} \right\| $. 
We have 
\begin{itemize}
    \item The maximum distance between $v_i$ and the farthest node to $v_i$ that belongs to a different community decreases after applying the graph filter:
    $\forall v_i \in \mathcal{V}, \mathcal{D}(\textbf{F}, v_i) \ge \mathcal{D}( \boldsymbol{f}(\textbf{A}) \textbf{F}, v_i )$.
    % \item The noise variance decreases: $\text{Var} \{ \textbf{N} \}  \ge \text{Var} \{ \boldsymbol{f}(\textbf{A}) \textbf{N} \} $ elementwise. 
    \item The total variance of the noise decreases: $ \left\| \text{Var} \{ \textbf{N} \}  \right\|_1 \ge \left\| \text{Var} \{ \boldsymbol{f}(\textbf{A}) \textbf{N} \} \right\|_1 $, where $\left\| X \right\|_1 = \sum_{i} \sum_{j} |X_{i,j}|$.
\end{itemize}
\end{theorem}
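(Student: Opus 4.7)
The plan is to exploit one structural fact that drives both statements: since $\hat{\textbf{A}}$ is entrywise non-negative and the coefficients satisfy $\gamma_k \geq 0$ with $\sum_k \gamma_k = 1$, the matrix $\sum_{k=0}^{K}\gamma_k \hat{\textbf{A}}^k$ has non-negative entries, so row-normalizing yields a row-stochastic $\boldsymbol{f}(\textbf{A})$. Each row $[\boldsymbol{f}(\textbf{A})]_{i,:}$ is then a probability distribution over the nodes, and this single property will be enough to drive both bullets.

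For the first bullet, I would first observe that $[\boldsymbol{f}(\textbf{A})\textbf{F}]_{i,:} = \sum_{k} [\boldsymbol{f}(\textbf{A})]_{ik}\, \boldsymbol{\mu}_{c_k}$ is a convex combination of the class centers, and similarly for any $v_j$. Using that both rows of $\boldsymbol{f}(\textbf{A})$ sum to one, I would rewrite the difference between the two filtered vectors as a double sum $\sum_{k,l}[\boldsymbol{f}(\textbf{A})]_{ik}[\boldsymbol{f}(\textbf{A})]_{jl}(\boldsymbol{\mu}_{c_k}-\boldsymbol{\mu}_{c_l})$, apply the triangle inequality to bound its norm by a convex combination of inter-center distances $\|\boldsymbol{\mu}_{c_k}-\boldsymbol{\mu}_{c_l}\|$, and hence by the diameter of $\{\boldsymbol{\mu}_1,\ldots,\boldsymbol{\mu}_c\}$. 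The delicate step is identifying this diameter with $\mathcal{D}(\textbf{F}, v_i)$: it is immediate under the symmetric cSBM setup with equidistant centers that the paper implicitly works with, where $\mathcal{D}(\textbf{F}, v_i)$ coincides with $\max_{k,l}\|\boldsymbol{\mu}_k-\boldsymbol{\mu}_l\|$ for every node. I expect this identification to be the main obstacle --- without some symmetry on the centers one only obtains a constant-factor bound rather than the stated inequality.

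For the second bullet, since $\textbf{N}_{kj} \sim \mathcal{N}(0,\sigma^2)$ i.i.d., the $(i,j)$ entry $[\boldsymbol{f}(\textbf{A})\textbf{N}]_{ij} = \sum_k [\boldsymbol{f}(\textbf{A})]_{ik}\textbf{N}_{kj}$ is Gaussian with variance $\sigma^2\sum_k [\boldsymbol{f}(\textbf{A})]_{ik}^2$. Because each weight lies in $[0,1]$ and the weights sum to one, $\sum_k [\boldsymbol{f}(\textbf{A})]_{ik}^2 \leq \sum_k [\boldsymbol{f}(\textbf{A})]_{ik} = 1$, so the per-entry variance is at most $\sigma^2$. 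Summing over all $(i,j)$ gives $\|\text{Var}\{\boldsymbol{f}(\textbf{A})\textbf{N}\}\|_1 \leq nd\sigma^2 = \|\text{Var}\{\textbf{N}\}\|_1$. This part is essentially a one-line variance calculation once row-stochasticity is in hand; I expect all of the conceptual work to concentrate in the first bullet.
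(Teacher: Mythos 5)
Your proposal follows essentially the same route as the paper: for the first bullet the paper likewise rewrites the difference of filtered rows as a combination of center differences (via a coupling $w_{m,n}^{(i,j)}$ with $\sum_{m,n}|w_{m,n}^{(i,j)}|\le 1$ rather than your product coupling $[\boldsymbol{f}(\textbf{A})]_{ik}[\boldsymbol{f}(\textbf{A})]_{jl}$, but the substance is identical) and bounds it by the largest inter-center distance, and for the second bullet it uses the same row-stochasticity argument (in trace form, with a looser constant than your direct per-entry computation). The delicate identification of the center diameter with $\mathcal{D}(\textbf{F}, v_i)$ that you flag is present---and silently assumed---in the paper's step $\|\boldsymbol{\mu}_m - \boldsymbol{\mu}_n\| \le \|\boldsymbol{\mu}_{c_i} - \boldsymbol{\mu}_{c_{j^*}}\|$, so your account matches the paper's proof, including its implicit symmetry assumption on the centers.
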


\begin{proof}
For the first result, 
by definition, 
\begin{footnotesize}
\begin{align*}
\mathcal{D}(\boldsymbol{f}(\textbf{A}) \textbf{F}, v_i) 
& = \max_{v_j \in \mathcal{V}, c_i \neq c_j}
\left\| 
\sum_{v_k \in \mathcal{V}} \left( \boldsymbol{f}(\textbf{A}) \right)_{i, k} \textbf{F}_{k, :} - \sum_{v_k \in \mathcal{V}} \left( \boldsymbol{f}(\textbf{A}) \right)_{j, k} \textbf{F}_{k, :} \right\| \\
& = \max_{v_j \in \mathcal{V}, c_i \neq c_j}
\left\| 
\left( \boldsymbol{f}(\textbf{A}) \right)_{i, :} \textbf{L} \mu - \left( \boldsymbol{f}(\textbf{A}) \right)_{j, :} \textbf{L} \mu \right\| 
% \\ &
= \max_{v_j \in \mathcal{V}, c_i \neq c_j} 
\left\| 
\sum_{c = 1}^{C} l_{c}^{(i, j)} \mu_{c}
\right\|, 
\end{align*}
\end{footnotesize}
% where $l_{1:c}^{(i, j)} = \left( \boldsymbol{f}(\textbf{A}) \right)_{i, :} \textbf{L} - \left( \boldsymbol{f}(\textbf{A}) \right)_{j, :} \textbf{L} $.
where $l^{(i, j)} = \left( \boldsymbol{f}(\textbf{A}) \right)_{i, :} \textbf{L} - \left( \boldsymbol{f}(\textbf{A}) \right)_{j, :} \textbf{L}$ and $ l^{(i, j)} \in \mathbb{R}^{1\times C} $, and $\left( \boldsymbol{f}(\textbf{A}) \right)_{i, :}$, $\left( \boldsymbol{f}(\textbf{A}) \right)_{j, :}$ represents the $i^{\text{th}}$ and $j^{\text{th}}$ row of $\boldsymbol{f}(\textbf{A})$ respectively. 
And $\exists w_{m, n}^{(i, j)}, \sum_{c = 1}^{C} l_{c}^{(i, j)} \mu_{c} = \sum_{m = 1}^{C} \sum_{n = 1}^{C} w_{m, n}^{(i, j)} (\mu_{m} - \mu_{n})$, such that $\forall 1 \le c, m, n, \le C, w_{c, c}^{(i, j)} = 0, w_{c, m}^{(i, j)} w_{c, n}^{(i, j)} \ge 0, \min(w_{m, n}^{(i, j)}, w_{n, m}^{(i, j)}) = 0$.
This can be seen as the process that the positive  
elements in 
% $l_{1:c}^{(i, j)}$ 
$l^{(i, j)}$ 
distribute its value to negative elements in 
% $l_{1:c}^{(i, j)}$.
$l^{(i, j)}$.
Then we can obtain
$ \sum_{c = 1}^{C} | l_{c}^{(i, j)} | = 2 \sum_{m = 1}^{C} \sum_{n = 1}^{C} |w_{m, n}^{(i, j)}| $.
Due to the fact that all elements in $\boldsymbol{f}(\textbf{A})$ are non-negative and each row sum of $\boldsymbol{f}(\textbf{A})$ equals one, we obtain $\forall v_i, v_j \in  \mathcal{V}$, $\sum_{c = 1}^{C} l_{c}^{(i, j)} = 0$ and $\sum_{c = 1}^{C} | l_{c}^{(i, j)} | <= 2$.
Further, we denote $v_{j^{*}} = \text{argmax}_{v_j \in \mathcal{V}, c_i \neq c_j} \left\| \textbf{F}_{i, :} - \textbf{F}_{j, :} \right\|$ for $\forall v_i \in \mathcal{V}$, then we obtain 
$
\mathcal{D}(\textbf{F}, v_i) = \left\| \textbf{F}_{i, :} - \textbf{F}_{j^{*}, :} \right\| =  \left\| \mu_{c_i} - \mu_{c_{j^{*}}} \right\|. 
$
Subsequently, we derive
\begin{footnotesize} % scriptsize,footnotesize, small
\begin{align*}
\mathcal{D}(\boldsymbol{f}(\textbf{A}) \textbf{F}, v_i) 
& = \max_{v_j \in \mathcal{V}, c_i \neq c_j} 
\left\| 
\sum_{m = 1}^{C} \sum_{n = 1}^{C} w_{m, n}^{(i, j)} (\mu_{m} - \mu_{n})
\right\| 
% \\ & 
\le 
\max_{v_j \in \mathcal{V}, c_i \neq c_j} 
\sum_{m = 1}^{C} \sum_{n = 1}^{C} |w_{m, n}^{(i, j)}| \left\| 
(\mu_{m} - \mu_{n})
\right\| \\
& \le 
\max_{v_j \in \mathcal{V}, c_i \neq c_j} 
\sum_{m = 1}^{C} \sum_{n = 1}^{C} |w_{m, n}^{(i, j)}| \left\| \mu_{c_i} - \mu_{c_{j^{*}}} \right\| 
% \\ & 
\le 
\left\| \mu_{c_i} - \mu_{c_{j^{*}}} \right\| = \mathcal{D}(\textbf{F}, v_i).  
\end{align*}
\end{footnotesize}

For the second result, $\boldsymbol{f}(\textbf{A})$ is a row stochastic matrix, so absolute values of all eigenvalues of $\boldsymbol{f}(\textbf{A})$ are within 1. Therefore, $\left\| \text{Var} \{ \boldsymbol{f}(\textbf{A}) \textbf{N} \} \right\|_1 = \textbf{trace} \left( \mathbb{E} ( 
   \boldsymbol{f}(\textbf{A}) \textbf{N} \textbf{N}^{\top}   \boldsymbol{f}(\textbf{A})^{\top} ) \right) 
   \le n \sigma^2 \cdot \textbf{trace} \left( \mathbb{E}(\boldsymbol{f}(\textbf{A}) \boldsymbol{f}(\textbf{A})^{\top} ) \right) \le n^2 \sigma^2 = \left\| \text{Var} \{ \textbf{N} \}  \right\|_1 $.

\end{proof}

Theorem \ref{rmk:f_A_bias_denoise} shows that the distances analyzed decrease after applying the graph filter, indicating that the difficulty increases for classification and reflects that bias is introduced. 
For noise, the variance of the noise decreases through the low-pass filter,
suggesting that the noise is reduced.
%The former is generally bad for performance, while the latter improves performance. 
In order to analyze the compatibility comprehensively, we need to consider these two effects simultaneously.
This also explains why $\mathcal{I}(\textbf{A})$ and $\mathcal{R}(\textbf{A})$ are necessary in $\mathcal{C}(\textbf{A})$.

\subsection{Motivating Example}

We further present a case study on cSBM to analyze the performance of GCN with different feature noise and
show the effects of $\boldsymbol{f}(\textbf{A})$.

\begin{figure}[t]
    \centering
    \includegraphics[width=\columnwidth]{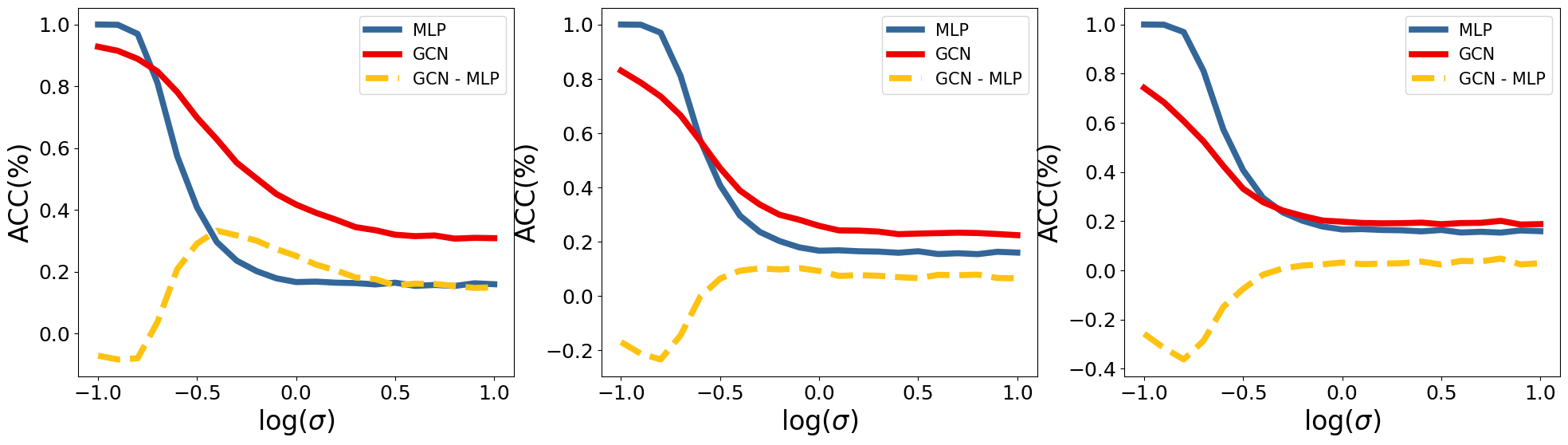}
    \vspace{-6mm}
    \caption{
    \textbf{The performance of GCN and MLP on the synthetic graph datasets generated by cSBM} with different $\sigma, p, q$, corresponding to the noise variance of features, intra-community and inter-community connection probability. 
    The synthetic datasets share the same statistics with Cora \cite{CoraAndCiteSeer}, such as the number of nodes and edges and the dimension of features. 
    $p$ and $q$ used to generate SBMs are $(0.9, 0.1)$, $(0.8, 0.2)$, and $(0.7, 0.3)$ respectively from left to right.
    }
    \label{fig:motivating_example}
\vspace{-6mm}    
\end{figure}

As shown in Figure \ref{fig:motivating_example}, MLP outperforms GCN when the noise variance $\sigma$ is close to zero. The performance gap occurs because $ \boldsymbol{f}(\textbf{A}) $ biases the prediction of GCN and the lower the quality of the topology, i.e., the lower $p$, the larger the gap, from left to right. 
However, as $\sigma$ increases, the performance of both MLP and GCN decreases, but the performance of GCN decreases more slowly than that of MLP because GCN smooths the noise through topology. 
During this phase, the performance of MLP gradually approaches that of GCN until GCN outperforms MLP. 
When the noise variance is too large to dominate the feature, the performance of MLP and GCN drops and converges. The convergence performance of GCN is usually higher than MLP, as MLP degenerates into random guesses, while GCN is still able to learn from topology. 

This motivating experiment shows that, compared with MLPs which do not utilize graph topology, 
the bias introduced by the low-pass filter $ \boldsymbol{f}(\textbf{A}) $ results in that MLPs outperform GNNs when the noise variance in features is near zero, suggesting that this effect is generally bad for performance.
However, compared to MLPs, low-pass filters $ \boldsymbol{f}(\textbf{A}) $ also provide an additional noise reduction ability, which improves the performance of GNNs when the noise variance in features is greater than zero.
Therefore, to fully analyze the compatibility between topology and task in GNN models, 
we design both $\mathcal{I}(\textbf{A})$ and $\mathcal{R}(\textbf{A})$ in $\mathcal{C}(\textbf{A})$ to analyze and quantify these two effects simultaneously.

\vspace{-1.5mm}
\section{Experiments}
\vspace{-1mm}

\label{sec:experiments}

We conduct experiments to validate the performance of TopoInf: 
\begin{itemize}
\vspace{-1.5mm}

% \item demonstrate the correlation between $\mathcal{I}_{\textbf{A}}$ and model performance; 
\item validate the effectiveness of TopoInf on graphs with ground truth labels; 
\item estimate TopoInf based on pseudo labels and utilize the estimated TopoInf to refine graph topology for improving model performance; 
\item show further applications of model improvement, by modifying topology guided by TopoInf, with DropEdge \cite{DropEdge} as an example.

\vspace{-1.5mm}
\end{itemize}

\noindent\textbf{Setup.} 
We choose six real-world graph datasets, namely Cora, CiteSeer, PubMed, Computers, Photos and Actor in our experiments \cite{CoraAndCiteSeer,Pubmed,AmazonComputerPhoto,GeomGCNTexasAndCornell}.
For Computers and Photos, we randomly select 20 nodes from each class for training, 30 nodes from each class for validation, and use the remaining nodes for testing.
For other datasets, we use their public train/val/test splits.
We choose nine widely adopted models, namely GCN, SGC, APPNP, GAT, GIN, TAGCN, GPRGNN, BernNet and GCNII to work on \cite{GCN,SGC,APPNP,GAT,GIN,TAGCN,GPRGNN,BernNet,GCNII}.

\subsection{Validating TopoInf on Graphs}

\noindent\textbf{Can TopoInf reflect edges' influence on model performance?} 
We first conduct experiments on TopoInf computed using ground truth labels.
Based on the former analysis, the sign of TopoInf implicates the directional influence of removing $e_{ij}$ on the model performance, while the absolute value of TopoInf reflects the magnitude of the influence.
We validate these implications in real datasets.
For each dataset, 
we compute TopoInf based on ground truth labels and partition edges with positive TopoInf into the positive set and edges with negative TopoInf into the negative set.
Then we sequentially remove edges from the positive/negative set ordered by the absolute values of their TopoInf in a descendant manner and record the performance of models.

As is shown in Figure \ref{fig:topoinf_guided_model_perf},
performance curves exhibit an S-shaped pattern.  
Specifically, after removing edges in the positive set, the model performance increases, while removing edges in the negative set diminishes the performance. 
Moreover, removing edges with higher absolute TopoInf values results in a greater derivative of the performance plots.
These are consistent with our previous analysis and show the effectiveness of TopoInf.

\begin{figure}[thbp]
  \centering
  \begin{minipage}[b]{\textwidth}
    \centering
    \includegraphics[width=\textwidth]{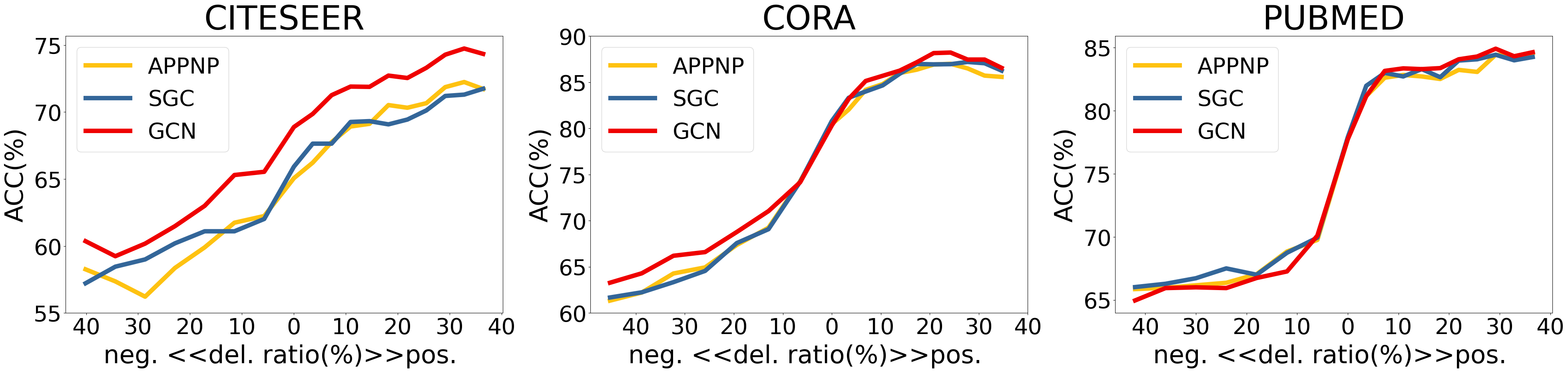}
    \includegraphics[width=\textwidth]{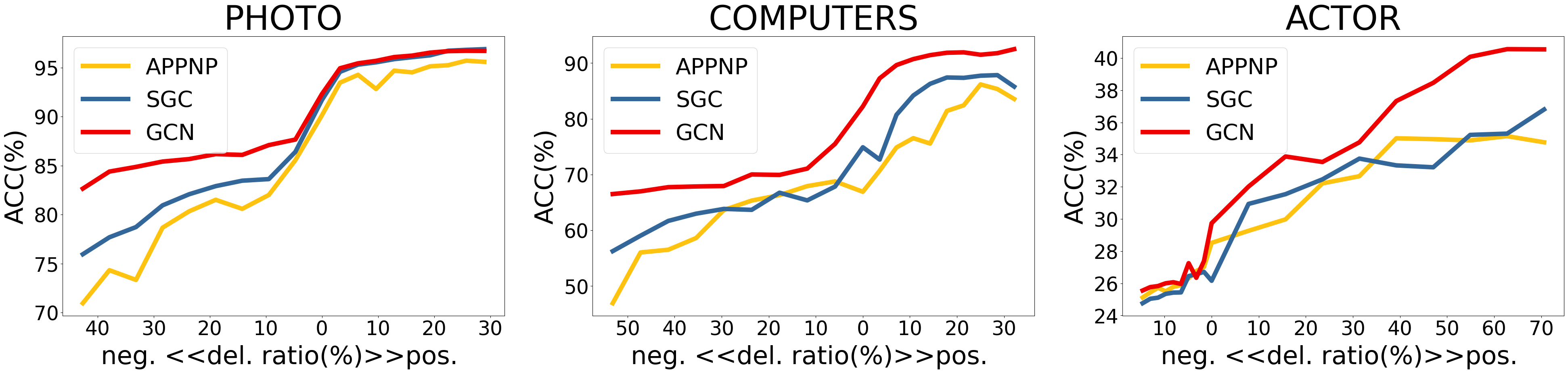}
  \end{minipage}
  \vspace{-6mm}
  \caption{
    \textbf{Performance change while deleting edges by TopoInf.} Horizontal axis's meaning: zero point corresponds to the original graph without deleting any edges. 
    The absolute value of the coordinate denotes the ratio of deleted edges to all edges. 
    The negative/positive coordinate corresponds to the case that the graph is obtained by deleting edges with negative/positive TopoInf values of the corresponding proportion, in the descendant order of the absolute value of TopoInf. 
    The vertical axis is the accuracy of the node classification in the GNN models. 
    The TopoInf values are calculated with $\mathcal{V}_t$ in Equation \ref{eq:I_A_node_wise} set as the test set.
    For each step, we remove 10\% of edges in the negative/positive set.
}
\label{fig:topoinf_guided_model_perf}
\vspace{-6mm}
\end{figure}

\begin{sidewaystable} [!htb]
\renewcommand\arraystretch{1.4} % height of rows
\normalsize
\centering
\tabcolsep=0.15cm   % separation between columns
% \vspace{-4mm}
% \resizebox{1\textwidth}{!}

{
\caption{\small{\textbf{Performance after deleting edges based on estimated TopoInf on different models and datasets.} 
The estimated TopoInf values are calculated with $\mathcal{V}_t$ in Equation \ref{eq:I_A_node_wise} set as the whole nodes.
The ratio of the edges removed to all edges is a hyperparameter chosen from $\{ 0.02, 0.04, 0.06, 0.08 \}$ and is determined by validation.
% number of negative influence edges to be removed
% and is chosen according to model performance on the validation set. 
% We train and fine-tune models on the validation set and select the hyperparameter by validation accuracy.
We run experiments 10 times and report testing accuracy with a 95\% confidence interval.
}}
\label{tab:exp3_compare_model}
\resizebox{.95\columnwidth}{!}{
\begin{tabular}{l l c c c c c c c c c}
    \toprule[1.5pt]
    Datasets & Method & GCN & SGC & APPNP & GAT & GIN & TAGCN & GPRGNN & BERNNET & GCNII \\
    \midrule
    \multirow{3}*{CORA} & original & 80.3\scriptsize{±0.3} & 80.8\scriptsize{±0.4} & 80.4\scriptsize{±0.4} & 80.4\scriptsize{±0.6} & 74.8\scriptsize{±0.8} & \textbf{82.0\scriptsize{±0.3}} & 81.3\scriptsize{±0.7} & 82.2\scriptsize{±0.3} & \textbf{83.6\scriptsize{±0.4 }} \\
    ~ & w/o retrain(our) & \textbf{81.6\scriptsize{±0.4}} & \textbf{81.2\scriptsize{±0.5}} & \textbf{81.1\scriptsize{±0.5}} & 80.9\scriptsize{±0.8} & \textbf{77.1\scriptsize{±0.8}} &\textbf{ 82.0\scriptsize{±0.2}} & \textbf{81.6\scriptsize{±0.6}} & \textbf{82.6\scriptsize{±0.4}} & \textbf{83.6\scriptsize{±0.1}} \\
    ~ & w retrain(our) & \textbf{81.6\scriptsize{±0.4}} & \textbf{81.2\scriptsize{±0.4}} & 81.0\scriptsize{±0.4} & \textbf{81.5\scriptsize{±0.4}} & 77.0\scriptsize{±0.8} & \textbf{82.0\scriptsize{±0.2}} & \textbf{81.6\scriptsize{±0.6}} &\textbf{ 82.6\scriptsize{±0.4}} & 83.5\scriptsize{±0.1} \\
    \hdashline  % [1pt/3pt]
    \multirow{3}*{CITESEER} & original & 68.7\scriptsize{±0.5} & 65.6\scriptsize{±1.0} & 65.1\scriptsize{±0.5} & 68.1\scriptsize{±0.7} & 61.6\scriptsize{±1.4} & 69.8\scriptsize{±0.4} & 69.4\scriptsize{±2.1} & \textbf{71.2\scriptsize{±0.3}} & 72.4\scriptsize{±0.6} \\
    ~ & w/o retrain(our) & \textbf{70.0\scriptsize{±0.7}} & 66.8\scriptsize{±1.0} & 67.3\scriptsize{±0.8} & \textbf{70.5\scriptsize{±1.1}} & \textbf{62.1\scriptsize{±1.4}} & 70.6\scriptsize{±0.3} & \textbf{71.3\scriptsize{±0.6}} & 71.0\scriptsize{±0.7} & \textbf{72.7\scriptsize{±0.6}} \\
    ~ & w retrain(our) & 69.8\scriptsize{±0.7} & \textbf{66.9\scriptsize{±1.2}} & \textbf{67.4\scriptsize{±0.8}} & \textbf{70.5\scriptsize{±1.0}} & \textbf{62.1\scriptsize{±1.4}} & \textbf{70.8\scriptsize{±0.2}} & \textbf{71.3\scriptsize{±0.5}} & 71.0\scriptsize{±0.6} & \textbf{72.7\scriptsize{±0.5}} \\
    \hdashline
    \multirow{3}*{PUBMED} & original & 77.9\scriptsize{±0.3} & 77.6\scriptsize{±0.2} & 77.6\scriptsize{±0.4} & 77.4\scriptsize{±0.3} & 75.5\scriptsize{±0.6} & 79.4\scriptsize{±0.1} & 78.6\scriptsize{±0.5} & 78.9\scriptsize{±0.3} & 79.2\scriptsize{±0.2} \\
    ~ & w/o retrain(our) & 78.7\scriptsize{±0.3} & \textbf{78.5\scriptsize{±0.5}} & \textbf{78.5\scriptsize{±0.4}} & 77.8\scriptsize{±0.2} & 76.0\scriptsize{±0.6} & \textbf{80.0\scriptsize{±0.3}} & \textbf{79.3\scriptsize{±0.5}} & \textbf{79.2\scriptsize{±0.2}} & \textbf{79.4\scriptsize{±0.2}} \\
    ~ & w retrain(our) & \textbf{78.8\scriptsize{±0.2}} & 78.4\scriptsize{±0.5} & \textbf{78.5\scriptsize{±0.3}} & \textbf{77.9\scriptsize{±0.2}} & \textbf{76.1\scriptsize{±0.7}} & 79.7\scriptsize{±0.3} & 79.2\scriptsize{±0.5} & 79.1\scriptsize{±0.5} & 79.3\scriptsize{±0.2} \\
    \bottomrule[1.5pt]
\end{tabular}
}
\bigskip\bigskip

\caption{\small{\textbf{Performance of different DropEdge strategies on different datasets and models.} 
    The estimated TopoInf values are calculated with $\mathcal{V}_t$ in Equation \ref{eq:I_A_node_wise} set as the whole nodes.
    The DropEdge rate and the temperature are hyper-parameters chosen from $\{ 0.3, 0.4, 0.5\}$ and $\{ 0.5, 0.75, 1\}$ respectively and are determined by validation.
    We run experiments 10 times and report testing accuracy with a 95\% confidence interval.
    }}
\label{tab:topoinf_guided_dropedge}
\resizebox{.95\columnwidth}{!}{    
\begin{tabular}{l l c c c c c c c c c}
    \toprule[1.5pt]
    Datasets & DropEdge & GCN & SGC & APPNP & GAT & GIN & TAGCN & GPRGNN & BERNNET & GCNII \\
    \midrule
    \multirow{3}*{CORA} & without & 80.3\scriptsize{±0.4} & 80.7\scriptsize{±0.7} & 80.5\scriptsize{±0.5} & 80.6\scriptsize{±0.6} & 74.7\scriptsize{±0.8} & 82.0\scriptsize{±0.4} & 81.5\scriptsize{±1.7} & 82.2\scriptsize{±0.4} & 83.6\scriptsize{±0.5} \\
    ~ & random & 81.9\scriptsize{±0.7} & 81.4\scriptsize{±0.8} & 80.8\scriptsize{±0.7} & 81.7\scriptsize{±0.6} & 76.1\scriptsize{±0.9} & \textbf{83.0\scriptsize{±0.4}} & 82.0\scriptsize{±0.9} & 82.3\scriptsize{±0.2} & 84.2\scriptsize{±0.4} \\
    ~ & TopoInf(our) & \textbf{82.2\scriptsize{±0.6}} & \textbf{81.5\scriptsize{±0.9}} & \textbf{81.6\scriptsize{±0.5}} & \textbf{82.2\scriptsize{±0.5}} & \textbf{77.2\scriptsize{±0.7}} & \textbf{83.0\scriptsize{±0.3}} & \textbf{82.1\scriptsize{±0.8}} & \textbf{82.8\scriptsize{±0.3}} & \textbf{84.5\scriptsize{±0.4}} \\
    \hdashline
    \multirow{3}*{CITESEER} & without & 68.8\scriptsize{±0.7} & 65.9\scriptsize{±1.3} & 65.1\scriptsize{±0.6} & 69.3\scriptsize{±0.9} & 61.4\scriptsize{±1.4} & 69.9\scriptsize{±0.4} & 70.4\scriptsize{±0.7} & 71.2\scriptsize{±0.5} & 72.5\scriptsize{±0.6} \\
    ~ & random & 68.7\scriptsize{±0.9} & 67.1\scriptsize{±1.0} & 67.8\scriptsize{±0.8} & 69.8\scriptsize{±0.8} & 62.4\scriptsize{±1.0} & 71.2\scriptsize{±0.4} & 70.9\scriptsize{±0.7} & 71.1\scriptsize{±0.6} & 72.5\scriptsize{±0.6} \\
    ~ & TopoInf(our) & \textbf{69.3\scriptsize{±1.1}} & \textbf{67.8\scriptsize{±0.9}} & \textbf{68.3\scriptsize{±0.9}} & \textbf{70.3\scriptsize{±0.5}} & \textbf{63.5\scriptsize{±0.9}} & \textbf{71.7\scriptsize{±0.3}} & \textbf{71.1\scriptsize{±0.6}} & \textbf{71.8\scriptsize{±0.4}} & \textbf{72.9\scriptsize{±0.6}} \\
    \hdashline
    \multirow{3}*{PUBMED} & without & 77.9\scriptsize{±0.5} & 77.6\scriptsize{±0.2} & 77.6\scriptsize{±0.4} & \textbf{77.5\scriptsize{±0.5}} & 75.5\scriptsize{±0.7} & 79.5\scriptsize{±0.3} & 78.7\scriptsize{±0.7} & \textbf{78.9\scriptsize{±0.4}} & \textbf{79.2\scriptsize{±0.2}} \\
    ~ & random & 77.5\scriptsize{±0.4} & 77.9\scriptsize{±0.3} & 77.6\scriptsize{±0.5} & 77.0\scriptsize{±0.4} & 76.9\scriptsize{±0.5} & 79.2\scriptsize{±0.3} & 78.5\scriptsize{±0.7} & 78.7\scriptsize{±0.5} & 78.8\scriptsize{±0.3} \\
    ~ & TopoInf(our) & \textbf{78.0\scriptsize{±0.6}} & \textbf{78.3\scriptsize{±0.3}} & \textbf{78.0\scriptsize{±0.4}} & 77.3\scriptsize{±0.4} & \textbf{77.4\scriptsize{±0.7}} & \textbf{79.7\scriptsize{±0.2}} & \textbf{79.3\scriptsize{±0.4}} & 78.8\scriptsize{±0.5} & 79.1\scriptsize{±0.2} \\
    \bottomrule[1.5pt]
\end{tabular}
}
}
\end{sidewaystable}

In addition, we compare our edge removal strategies with other methods. 
We choose two baseline methods, namely Random and AdaEdge.
For Random, we randomly remove edges with equal probability.
AdaEdge \cite{MADGap} divides edges between nodes with the same labels into the negative set and different ones into the positive set. AdaEdge randomly removes edges in each set, while TopoInf removes edges in each set in descending order, sorted by the absolute values of their TopoInf.

Figure \ref{fig:exp3_compare_different_strategies} shows our results.
When an equal number of positive/negative edges are removed, 
the model performance of Random remains unchanged almost, 
while the model performance of AdaEdge and TopoInf increases/decreases distinctly, indicating a correlation between both the homophily metric and our compatibility metric with the model performance.
Moreover, the model performance of TopoInf increases/decreases more significantly than that of AdaEdge. This shows the effectiveness of TopoInf as it not only identifies the direction of influence 
% (i.e. whether removing the edge increases or decreases model performance) 
but also measures the magnitude of influence.

\subsection{Validating Estimated TopoInf}

When ground truth labels are generally only available on training sets, we employ an initial training phase using the GNN model to obtain pseudo labels for each node. 
Thereafter, we estimate TopoInf 
using these pseudo labels for the nodes without true labels and remove edges based on the estimated TopoInf. 

\begin{figure}[htbp]
  \centering
  \subfloat     % []
  {
      \label{fig:exp3_compare_different_strategies_pos}\includegraphics[width=0.5\textwidth]{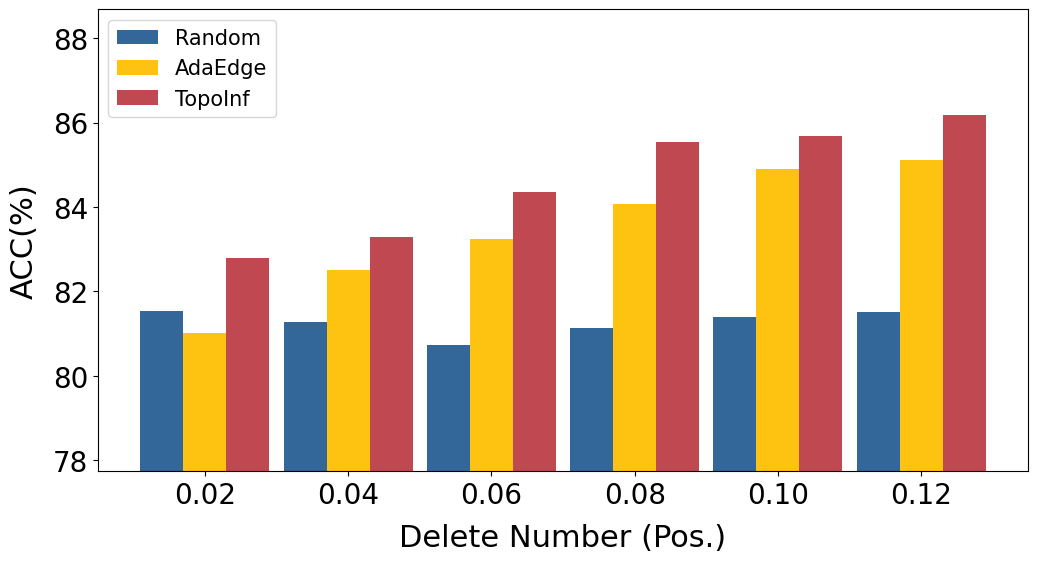}

  }
  \subfloat     % []
  {
      \label{fig:exp3_compare_different_strategies_neg}\includegraphics[width=0.5\textwidth]{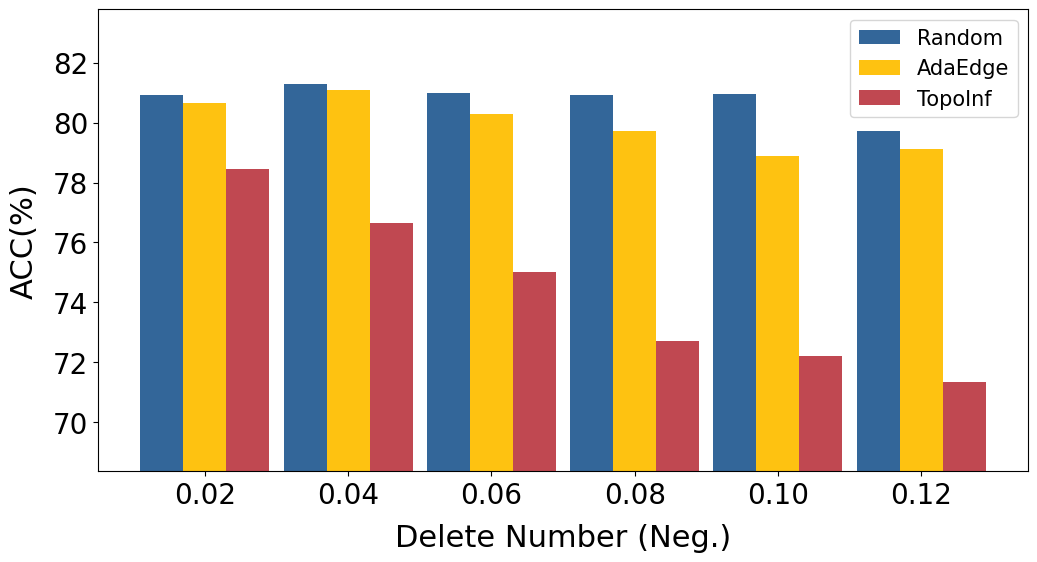}
  }
  \vspace{-3mm}
  \caption{
  \textbf{GCN performance with different edge deletion strategies on Cora.} 
    The left figure shows the results of removing positive edges to increase performance (higher is better), and the right figure shows the results of removing negative edges to decrease performance (lower is better). 
    The horizontal axis denotes the ratio of deleted edges to all edges.
    The vertical axis is the accuracy of the node classification in GCN.
    The TopoInf values are calculated with $\mathcal{V}_t$ in the Equation \ref{eq:I_A_node_wise} set as a test set.
  }
\label{fig:exp3_compare_different_strategies}
\vspace{-6mm}
\end{figure}

\noindent\textbf{How well does estimated TopoInf work to improve model performance?}
We conduct experiments on nine GNN models.
We train the model and obtain pseudo labels.
After that, we can estimate TopoInf based on pseudo labels and refine the original topology based on the estimated TopoInf.
Specifically, we remove a given number of edges, which is a hyper-parameter determined by validation, with the highest estimated TopoInf. 
Since we have already obtained the model parameters through the initial training, we can choose to either continue to apply these parameters on the refined topology (denoted as w retrain) or retrain the model on the refined topology (denoted as w/o retrain).

Table \ref{tab:exp3_compare_model} shows the results.
We can see that in almost all cases, the refined topologies based on estimated TopoInf outperform the original ones, in both retraining and non-retraining settings.
On SOTA models, such as BernNet, TAGCN and GCNII, TopoInf still achieves competitive results or improves performance.
% This experiment shows that TopoInf can be applied to modify topology to enhance the performance of different graph learning models. 
This experiment shows that the estimated TopoInf is still an effective metric for enhancing topology.

\noindent\textbf{Are there any other approaches to utilize estimated TopoInf?}
In this work, we also demonstrate that the estimated TopoInf can be combined with other methods to guide topology modification and improve model performance. 
We take
DropEdge \cite{DropEdge} as an example, where edges are randomly removed from the original graph at each training epoch to help prevent over-fitting and over-smoothing and achieve better performance. 
We replace the random dropping edge scheme with a TopoInf-guided scheme, where edges with higher TopoInf are more likely to be dropped.
Specifically, for TopoInf-guided DropEdge, we define our edge-dropping probability as
$
P_{ij} \propto \textbf{A}_{ij} \cdot \exp \left(  \nabla \mathcal{C}_{\textbf{A}}(e_{ij}) / \tau \right),
$
 where $ \textbf{A}_{ij}$ indicates the existence of $e_{ij}$, $\nabla \mathcal{C}_{\textbf{A}}(e_{ij})$ is the TopoInf of $e_{ij}$, and $\tau$ is the temperature, which is a hyper-parameter and determined by validation.

Table \ref{tab:topoinf_guided_dropedge} shows the results.
In almost all cases, our TopoInf-guided DropEdge method achieves the highest accuracy, and in all cases, our method outperforms DropEdge.
Even in cases where DropEdge degenerates the performance of GNNs, such as BernNet on CiteSeer, and GPRGNN on PubMed, TopoInf-guided DropEdge can still improve the performance.
This experiment shows the potential to combine TopoInf with other methods for topology modification and performance improvement. 

\section{Related Works}
\label{sec:related_works}

\noindent\textbf{Graph filters optimization.}
These studies approach graphs as filters for features and aim to find the optimal filter coefficients for the graph learning task.
ChebNet \cite{ChebNet} approximates the spectral graph convolutions using Chebyshev polynomials. 
GPRGNN \cite{GPRGNN} adaptively learns a polynomial filter and jointly optimizes the neural parameters of the neural network as well as filter coefficients.
ASGC \cite{ASGC} obtains filter coefficients by minimizing the approximation error between the raw feature and the feature filtered by graph topology, to make SGC appropriate in heterophilic settings. 
These works improve graph learning by optimizing coefficients of graph filters, while we focus on optimizing graph topology.

\noindent\textbf{Graph topology optimization.}
Curvature-based methods \cite{OversquashingCurvature} view graphs from a geometric view, measure the effect of edges by the graph curvature, and rewire the graph topology with the help of curvature, to improve the model performance.
Node pair distances \cite{MADGap,measure} and  edge signal-to-noise ratio \cite{ESNR&GPS}
are proposed to measure the topological information and alleviate the over-smoothing to further build deeper models to exploit the multi-hop neighborhood structures. In the era of large models, Sun et al. \cite{llmstructure} utilize LLMs to evaluate the edges between nodes based on node attributes in order to optimize graph topology. 

\section{Conclusion}

In this work, 
we model and analyze the compatibility between the topological information of graph data and downstream task objectives, 
and propose a new metric called TopoInf to characterize the influence of one single edge by measuring the change of the compatibility.
We also discuss the potential applications of leveraging TopoInf to model performance adjustment and empirically demonstrate the effectiveness of TopoInf.
In future, a better estimation of the topological effect of GNNs may further improve our metric.  

\section*{Acknowledgment}
This work was supported by the National Key Research and Development Plan No. 2022YFB3904204,
NSF China under Grant No. 62202299, 62020106005, 61960206002, 
Shanghai Natural Science Foundation No. 22ZR1429100.

% \clearpage

% \bibliographystyle{splncs04}
% \bibliography{ref}

\end{document}